\theoremstyle{plain}
\newtheorem{thm}{Theorem}[section]
\newtheorem{prop}[thm]{Proposition}
\theoremstyle{definition}
\newtheorem{defn}{Definition}[section]
\theoremstyle{remark}
\title{CH-MARL: Constrained Hierarchical Multiagent Reinforcement Learning for Sustainable Maritime Logistics}
\author{ \href{https://orcid.org/0000-0002-2111-3456}{\includegraphics[scale=0.06]{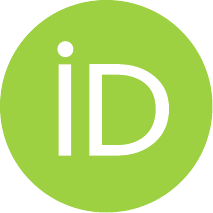}\hspace{1mm}Saad~Alqithami}
\\
	\texttt{alqithami@gmail.com} \\
}
\begin{document}
\maketitle

\begin{abstract}
Constrained multiagent reinforcement learning (MARL) offers a powerful paradigm for coordinating decisions among autonomous agents, yet few approaches address the simultaneous need for global emissions control, partial observability, and fairness in dynamic industrial settings. In this paper, we propose a novel Constrained Hierarchical MARL (CH-MARL) framework for optimizing energy efficiency and reducing greenhouse gas emissions within maritime logistics. Our method formulates the environment as a partially observable, non-stationary system in which vessel agents, port agents, and regulatory agents cooperate under global environmental caps. Specifically, we extend conventional policy-gradient techniques by introducing a real-time constraint-enforcement layer that dynamically adjusts the agents’ feasible action space and shared reward signals to ensure overall compliance with environmental targets. To handle the inherent complexity of maritime operations, we adopt a hierarchical approach: high-level agents learn strategic decisions such as route planning and emission budgeting, while lower-level agents fine-tune local actions (e.g., speed control, berth scheduling). We further embed a fairness-aware objective to balance resource allocation among vessels of different sizes and capacities, preventing disproportionate costs for smaller stakeholders. Experimental evaluations on a digital-twin testbed of multiple shipping lanes and variable port conditions demonstrate that CH-MARL outperforms baseline methods by significantly reducing total emissions and fuel consumption without compromising operational throughput. Moreover, the real-time constraint layer consistently maintains global emissions below specified limits, and fairness metrics indicate minimal disparities among agents in fuel cost and delay. Our findings highlight the scalability and adaptability of CH-MARL for sustainability-driven maritime logistics and pave the way for broader applications in other constrained, multi-objective industrial domains.
\end{abstract}

\keywords{Multiagent Reinforcement Learning \and
Greenhouse Gas Emission Reduction \and
Dynamic Constraint Enforcement \and
Energy-Efficient Logistics \and
Fairness in Multiagent Systems}

\section{Introduction} \label{introduction}

The advent of globalized trade has led to unprecedented growth in the volume and complexity of maritime logistics. As one of the most cost-effective modes of transportation, maritime shipping has become indispensable for connecting economies and supporting international trade. However, this growth comes with substantial environmental and operational challenges. The sector’s heavy reliance on fossil fuels contributes significantly to global greenhouse gas (GHG) emissions, accounting for nearly 2.89\% of global emissions \cite{smith2014third,imo2020}. Moreover, the International Maritime Organization (IMO) has outlined a strategy to reduce GHG emissions from international shipping by at least 50\% by 2050 compared to 2008 levels, aiming for eventual decarbonization \cite{imo2018strategy}. These ambitious targets underscore the pressing need for transformative solutions to meet regulatory requirements and societal expectations.


Environmental pressures are further compounded by the intricate logistics of coordinating diverse stakeholders, including shipping companies, port authorities, and policymakers, each with unique objectives and constraints. Maritime shipping often operates in complex, multi-actor environments, where optimizing energy efficiency, reducing emissions, and maintaining operational throughput are competing priorities. 
Addressing these multifaceted issues requires innovative strategies that leverage advancements in artificial intelligence and optimization techniques.

The field of Multiagent Reinforcement Learning (MARL) offers a promising approach for tackling the dynamic and distributed nature of maritime logistics. By enabling multiple autonomous agents to interact and learn in complex environments, MARL provides a framework for optimizing decision-making processes across scales \cite{hernandez2019survey}. However, integrating sustainability and fairness considerations into such frameworks introduces additional layers of complexity, particularly when navigating global constraints like emission caps and ensuring equitable outcomes among stakeholders.

\subsection{Motivation}
Maritime logistics underpins around 80\% of global trade, highlighting its role as the cornerstone of international commerce \cite{imo2020}. Rising trade volumes coincide with growing urgency to reduce the sector’s environmental impact, most notably by curbing GHG emissions \cite{imo2020}. Regulatory bodies, alongside greater societal awareness of climate change, have amplified demands for cleaner shipping and sustainable maritime operations.

Hence, stakeholders across the maritime sector, from international shipping conglomerates to smaller shipping lines, are compelled to adopt more sustainable practices. Optimizing energy efficiency, reducing emissions, and maintaining operational throughput simultaneously is challenging in a competitive, dynamic environment. 
These complexities motivate the development of novel algorithms and frameworks that harness MARL to address environmental imperatives without compromising economic viability.

\subsection{Problem Statement}
Achieving sustainability in maritime logistics involves optimizing multiple objectives concurrently. Stakeholders must ensure adherence to global emission caps while preserving efficiency and upholding fairness in resource allocation. Such objectives become more complicated in real-world maritime environments characterized by partial observability, non-stationarity, and the interactions of numerous autonomous agents \cite{hernandez2019survey,foerster2016learning}.

Existing approaches tend to address particular aspects of these intricacies. They may focus on localized optimization or partial constraint enforcement, lacking the comprehensive, integrated mechanisms needed to balance global constraints and fairness considerations effectively. As a result, there is a pressing need for frameworks that holistically incorporate constraint satisfaction, equitable resource distribution, and adaptive multi-agent coordination.

\subsection{Contributions}
This paper presents a \emph{Constrained Hierarchical Multiagent Reinforcement Learning (CH-MARL)} framework that addresses these multifaceted requirements within sustainable maritime logistics. Our main contributions are: 

\begin{enumerate}
    \item \textbf{Dynamic constraint-enforcement layer:} 
    We propose real-time mechanisms to ensure compliance with global emission caps, thereby reinforcing sustainability goals throughout the operational horizon. 
    
    \item \textbf{Fairness-aware reward shaping:} 
    We integrate fairness metrics into the reward function, promoting equitable cost and resource sharing among multiple stakeholders. 
    
    \item \textbf{Validation in a maritime digital twin:} 
    We demonstrate significant enhancements in efficiency and emissions reduction within a digital twin environment for maritime logistics, highlighting the practicality and effectiveness of our proposed framework.
\end{enumerate}

By addressing operational efficiency, environmental sustainability, and stakeholder equity, \textit{CH-MARL} extends the capabilities of existing MARL models and paves the way for robust, scalable solutions in real-world maritime settings.

\subsection{Paper Organization}
The remainder of this paper is organized as follows: Section \ref{background_related_work} provides a detailed review of the background and related work, highlighting advancements in Multiagent Reinforcement Learning (MARL), Constrained Reinforcement Learning (CRL), and their applications in maritime logistics. Section \ref{sec:methodology} presents the proposed CH-MARL framework, detailing its architecture, methodology, and key components. Section \ref{sec:experiments} outlines the experimental setup, including the digital twin environment and evaluation metrics. Results and discussions are presented in Section \ref{sec:experiments}, followed by conclusions and future work in Section \ref{sec:conclusion}.

\section{Background and Related Work} \label{background_related_work}

The rapid advancements in artificial intelligence have paved the way for the development of multiagent systems, where multiple autonomous entities interact within shared environments to achieve individual or collective goals. Multiagent Reinforcement Learning (MARL) has emerged as a critical area of research within AI, addressing the unique challenges posed by these systems. These challenges include coordinating behaviors in dynamic and uncertain environments, managing conflicts between agents with competing objectives, and ensuring the scalability of solutions as the number of agents increases. MARL is particularly relevant in domains where complex decision-making and adaptability are required, such as robotics, logistics, and industrial automation.

In parallel, there has been growing interest in Constrained Reinforcement Learning (CRL), which extends traditional reinforcement learning to incorporate safety and performance constraints. This paradigm is crucial for ensuring that AI systems operate within acceptable boundaries, especially in high-stakes applications such as autonomous driving, healthcare, and energy management. Integrating constraints into MARL adds another layer of complexity, requiring innovative approaches to balance local and global requirements across multiple agents.

The maritime and logistics industries stand out as prominent domains where MARL and CRL have significant potential. As global trade and transportation continue to grow, optimizing operations in these sectors has become increasingly critical for reducing costs, improving efficiency, and minimizing environmental impact. However, the dynamic and interconnected nature of maritime logistics presents unique challenges, including the need for hierarchical decision-making and adherence to international regulations on emissions and sustainability.

Fairness in multiagent systems is another critical aspect that has garnered attention, particularly in scenarios where resource allocation or policy decisions affect diverse stakeholders. Ensuring equitable outcomes is essential for fostering cooperation and maintaining trust among participants. This is especially relevant in industrial contexts, where imbalanced policies can disadvantage smaller players, leading to inefficiencies and conflicts.

By exploring the intersections of MARL, CRL, maritime logistics, and fairness, this work aims to address key research gaps and advance the state of the art in these fields. The following sections provide a comprehensive review of existing literature and identify opportunities for future research.

\subsection{Multiagent Reinforcement Learning}
Multiagent Reinforcement Learning (MARL) has gained increasing prominence as a method to coordinate autonomous agents in complex, dynamic environments. MARL settings can be broadly categorized into cooperative and competitive domains. In cooperative scenarios, multiple agents aim to maximize a shared objective, exemplified by tasks such as cooperative robotics or emergency response \cite{panait2005cooperative,foerster2018learning}. In contrast, competitive MARL addresses settings where agents have conflicting or opposing goals, including zero-sum games and competitive market models \cite{lowe2017multi,silver2017mastering}.

One core challenge in MARL involves \emph{scalability}: as the number of agents increases, the growth of state and action spaces becomes exponential, making learning computationally intractable \cite{hernandez2019survey}. Techniques such as decentralized training with centralized execution (DTCE) \cite{oliehoek2008optimal} and parameter sharing \cite{gupta2017cooperative} have partially mitigated this issue, although these methods often assume relatively homogeneous agent types. Additionally, agents typically operate under \emph{partial observability}, having access only to localized information \cite{foerster2016learning}, which can impede learning. Recent approaches that incorporate recurrent architectures \cite{hausknecht2015deep} and communication protocols \cite{sukhbaatar2016learning} show promise, but also increase computational overhead. Finally, \emph{non-stationarity} arises when multiple agents learn in parallel, making convergence to stable policies or equilibria more challenging \cite{zhang2021multi}. Mechanisms such as opponent modeling \cite{albrecht2018autonomous} and equilibrium learning \cite{shou2022multi} are active research areas aimed at alleviating these challenges.

\subsection{Constrained Reinforcement Learning}
Constrained Reinforcement Learning (CRL) introduces explicit constraints—often reflecting safety or performance thresholds—into the learning process. Early work employed Lagrangian relaxation \cite{altman1999constrained}, translating constrained optimization into unconstrained forms through penalty functions. More recent research on Constrained Policy Optimization (CPO) \cite{achiam2017constrained} uses trust-region methods to ensure constraints remain satisfied, offering theoretical performance guarantees at the cost of higher computational demands.

Alternative methods, such as primal-dual optimization and constraint sampling, propose additional ways of managing constraints. However, these techniques tend to struggle with scalability in multiagent contexts. When applied to MARL, the necessity of balancing both individual-agent constraints and system-wide limitations complicates policy design. Further, real-time constraints in large-scale applications—such as traffic networks or power grids—necessitate efficient CRL approaches that can handle multiagent interactions.

\subsection{MARL for Maritime \& Logistics}
The maritime sector has begun exploring MARL to optimize factors such as vessel routing, port operations, and scheduling. For instance, vessel routing studies show MARL can simultaneously reduce fuel consumption and improve scheduling efficiency. Meanwhile, research on port operations applies MARL to berth allocation, crane dispatch, and container handling.

Despite notable achievements, existing approaches rarely incorporate global emission caps enforced by international regulations like those of the IMO \cite{imo2020}. Fairness considerations are also overlooked, potentially disadvantaging smaller shipping lines. While hierarchical coordination can help manage decisions across different levels (e.g., vessel-level, fleet-level), such methods remain relatively underexplored in the maritime domain. These gaps highlight the potential for a more robust and integrated MARL framework that can reconcile emission constraints and fairness goals with overarching operational objectives.

\subsection{Fairness in Multiagent Systems}
Fairness is crucial in multiagent resource-allocation problems, as policy decisions may disproportionately affect certain stakeholders. Foundational concepts like max-min fairness, envy-freeness, and Gini coefficients offer theoretical bases for diagnosing and mitigating inequality \cite{nash1950bargaining,lipton2004approximately}. In practice, such metrics have been employed in traffic management \cite{aloor2024cooperation} and cloud computing, among other areas.

In industrial logistics, smaller firms risk systematic disadvantage when competition arises for shared resources,e.g., docking berths or emission budgets. Ensuring equitable outcomes is not only a moral imperative but also fosters cooperation, enhancing the real-world viability of multiagent solutions. Nonetheless, fair allocation is challenging to implement in MARL, where efficiency and equity often compete \cite{rawls1971theory}. More research is needed to integrate fairness constraints effectively in systems that also target high performance.

The literature indicates a strong need for hierarchical methods that incorporate both global constraints and fairness into MARL, especially in sustainable maritime logistics. Existing work offers only partial solutions, typically addressing isolated aspects like constrained optimization or static fairness scenarios. By bridging these gaps, future MARL frameworks can more effectively guide sustainable decision-making and policy enforcement in real-world maritime contexts.

\section{Problem Formulation}
\label{sec:problem_formulation}

In this section, we define the maritime logistics problem targeted by our 
hierarchical multi-agent framework. We first describe the overall system, 
including ports, vessels, and dynamic factors (Sec.~\ref{subsec:system-description}), 
followed by an explanation of agent roles in a hierarchical setting 
(Sec.~\ref{subsec:agents-hierarchies}). We then detail the formal state, 
action, and reward structures (Sec.~\ref{subsec:state-action-reward}), 
illustrate the global constraints that must be respected 
(Sec.~\ref{subsec:global-constraints}), and motivate the fairness objectives 
that ensure equitable resource distribution among agents 
(Sec.~\ref{subsec:fairness-objectives}). Finally, we summarize the domain 
assumptions and simplifications adopted for tractability 
(Sec.~\ref{subsec:assumptions-simplifications}).

\subsection{System Description}
\label{subsec:system-description}

Maritime logistics networks are composed of multiple \emph{ports} and 
\emph{vessels} transporting goods along specified routes. Each port 
provides critical services---such as berths, cranes, and cargo-handling 
facilities---to the vessels arriving at scheduled or unscheduled intervals. 
In our setting, we assume a network $\mathcal{G} = \{\mathcal{P}, \mathcal{R}\}$, 
where $\mathcal{P}$ is the set of ports and $\mathcal{R}$ is the set of 
possible sea routes between these ports. 

A typical planning or operational horizon (e.g., daily or weekly) involves 
\emph{multiple vessels} traveling between ports under dynamically changing 
factors:
\begin{itemize}[noitemsep, topsep=0pt]
    \item \textbf{Weather conditions:} Storms, wind, and waves can affect fuel consumption and vessel speed.
    \item \textbf{Port congestion:} High traffic at a port may lead to queuing delays and increased emissions due to idling.
    \item \textbf{Route variations:} Different sea lanes or alternative channels can exist, each with distinct distance, fuel cost, and traffic patterns.
    \item \textbf{Mechanical uncertainties:} Vessels can experience engine wear, maintenance needs, or unexpected failures that alter operational performance.
\end{itemize}
These \emph{dynamic factors} introduce \emph{stochasticity} and 
\emph{partial observability} into the system, meaning no single entity has a perfectly accurate or complete snapshot of all relevant parameters in real time.

\subsection{Agents and Hierarchical Roles}
\label{subsec:agents-hierarchies}

We model the network using a set of autonomous, decision-making \emph{agents}. Following a \emph{hierarchical} paradigm, these agents are divided into two primary layers:

\paragraph{High-Level Agents (Strategic Layer).} 
These agents operate at a coarser timescale (e.g., hours or days) and have a broader view of the logistics network. Their responsibilities include:
\begin{itemize}[noitemsep, topsep=0pt]
    \item \textbf{Route Planning:} Selecting high-level paths through the port network for each vessel, considering distance, congestion, and overall fleet coordination.
    \item \textbf{Emission Budgeting:} Allocating emission allowances across multiple vessels or routes based on predicted fuel consumption and regulatory thresholds.
    \item \textbf{Scheduling Coordination:} Coordinating vessel 
    departure/arrival times at ports to minimize queueing and turnaround delays.
\end{itemize}

\paragraph{Low-Level Agents (Operational Layer).}
At finer granularity (e.g., every few minutes or seconds), low-level agents make local decisions consistent with the strategic directives. Examples include:
\begin{itemize}[noitemsep, topsep=0pt]
    \item \textbf{Speed Control:} Dynamically adjusting vessel speed to meet emission targets, fuel constraints, or arrival deadlines.
    \item \textbf{Berthing and Cargo Handling:} Managing docking assignments and crane usage based on immediate port conditions, workforce availability, and mechanical constraints.
    \item \textbf{On-Board Resource Management:} Monitoring engine performance, fuel flow rates, and battery/hybrid systems (if 
    available).
\end{itemize}
This division of roles allows each layer to focus on decision-making at its own operational scope, thereby increasing \emph{scalability} and \emph{adaptability} in large-scale maritime scenarios.

\subsection{State, Action, and Reward Structures}
\label{subsec:state-action-reward}

\paragraph{State Space.}
Let $s_t \in \mathcal{S}$ denote the global state of the maritime system 
at time $t$, which may include:
\begin{itemize}[noitemsep, topsep=0pt]
    \item \emph{Vessel states:} Locations, velocities, fuel levels, mechanical health indicators.
    \item \emph{Port states:} Queue lengths, berth occupancy, crane availability, operational schedules.
    \item \emph{Environmental data:} Partial or noisy weather forecasts, wave heights, visibility.
    \item \emph{Historical usage:} Aggregated emissions or fuel usage up to time $t$, to track proximity to global caps.
\end{itemize}
In practice, each agent observes only a subset of the global state (i.e., $o^i_t \subseteq s_t$), reflecting \emph{partial observability}.

\paragraph{Action Space.}
Each agent $i$ selects an action $a^i_t$ from its individual action space $\mathcal{A}_i$. Examples include:
\begin{itemize}[noitemsep, topsep=0pt]
    \item \emph{High-Level Actions:} Assign route $r_k$ to vessel $v_j$, issue emission allowance $B_j$ for a time window $\Delta t$.
    \item \emph{Low-Level Actions:} Increase or decrease vessel speed, request a specific berth at port $p_m$, dispatch cargo-handling equipment.
\end{itemize}
The \emph{hierarchical} structure means that high-level actions can constrain the choices available to low-level agents (e.g., limiting feasible speed ranges or specifying which port to approach next).

\paragraph{Reward Structure.}
We define a \emph{weighted} or \emph{composite} reward to balance 
multiple objectives:
\[
    r^i_t \;=\; \underbrace{R_{\text{cost}}(a^i_t)}_{\text{fuel, time}} 
    \;+\; \underbrace{R_{\text{emission}}(a^i_t)}_{\text{GHG reductions}} 
    \;+\; \underbrace{R_{\text{fair}}(a^i_t)}_{\text{equity term}}.
\]
Concretely, $R_{\text{cost}}$ might be negatively proportional to fuel consumption or berth fees; $R_{\text{emission}}$ rewards strategies that lower CO$_2$ outputs; and $R_{\text{fair}}$ enforces equity constraints (see Sec.~\ref{subsec:fairness-objectives}). Depending on the \emph{global} or \emph{local} nature of each objective, these terms may be shared or agent-specific.

\subsection{Global Constraints}
\label{subsec:global-constraints}

Aside from local reward optimization, maritime operators must comply with strict \emph{global constraints}, such as:
\begin{itemize}[noitemsep, topsep=0pt]
    \item \textbf{Emission Caps:} A regulatory or self-imposed limit on total CO$_2$ or other pollutants (e.g., NO$_x$) for each route, port area, or entire shipping alliance over a fixed horizon.
    \item \textbf{Port Resource Capacity:} Physical constraints on the number of vessels that can be serviced at a port simultaneously (limited berths, crane scheduling, labor availability).
\end{itemize}
In a multi-agent context, these constraints are \emph{shared} by all agents; thus, a single vessel’s high fuel usage or a congested port schedule can push the system over its limit. Our framework addresses these concerns via \emph{primal-dual} penalty mechanisms or other constraint-enforcement algorithms, ensuring real-time tracking of resource usage and immediate penalization for exceeding thresholds.

\subsection{Fairness Objectives}
\label{subsec:fairness-objectives}

Due to the heterogeneous nature of vessels and shipping lines, 
\emph{fairness} emerges as a critical objective. Without explicit 
fairness considerations, certain agents (e.g., smaller operators) may routinely end up with unfavorable slots or higher operational costs, undermining cooperation and real-world adoption. We thus define a \emph{fairness metric}, $\mathcal{F}(\{c_i\})$, where $c_i$ represents the cumulative cost or burden accrued by agent $i$. Possible metrics include:
\begin{itemize}[noitemsep, topsep=0pt]
    \item \emph{Gini coefficient:} Measures inequality in the distribution of $c_i$ values.
    \item \emph{Max-min fairness:} Minimizes the maximum deviation between agents’ costs.
    \item \emph{Envy-freeness:} Ensures no agent would prefer another agent’s allocation of resources to its own.
\end{itemize}
We incorporate $\mathcal{F}$ into agent rewards or treat it as an 
\emph{auxiliary constraint} to ensure that operational burdens and 
benefits are equitably distributed across the fleet.

\subsection{Assumptions and Simplifications}
\label{subsec:assumptions-simplifications}

To manage complexity, we adopt the following assumptions and 
simplifications:
\begin{itemize}[noitemsep, topsep=0pt]
    \item \textbf{Discrete Time Steps:} Both high-level and low-level decisions occur at discrete intervals (e.g., 15-minute or hourly increments), approximating continuous processes.
    \item \textbf{Simplified Weather Model:} While weather is stochastic, we assume a finite set of forecast scenarios (e.g., calm, moderate, storm). Agents receive partial updates regarding the current scenario’s likelihood at each step.
    \item \textbf{Limited Mechanical Failures:} We consider mechanical uncertainties (engine wear, malfunctions) in a simplified manner, modeling them as random events with fixed probability distributions.
    \item \textbf{Agent Collaboration vs.~Competition:} We focus primarily on \emph{cooperative or semi-cooperative} settings, acknowledging that in real-world shipping alliances, competition can also shape decision incentives. If needed, competition can be introduced by adjusting reward functions or agent objectives.
\end{itemize}

For tractability, we adopt a discrete-time model with a finite set of weather scenarios, each occurring with known probability. Mechanical failures are similarly modeled as random events with specified likelihoods. We primarily consider cooperative or semi-cooperative scenarios where agents aim to achieve shared sustainability and operational goals, although competitive dynamics can be introduced by altering reward structures. Under these assumptions, the problem becomes a constrained hierarchical MARL setting: agents in each layer learn policies $\pi^i$ that optimize local rewards while adhering to global constraints and fairness criteria, thereby aiming to achieve robust, equitable, and environmentally sound maritime logistics.

By formally describing the maritime environment, agents, actions, rewards, constraints, and fairness goals, we establish a clear foundation for the \emph{Methodology} (Sec.~\ref{sec:methodology}). In particular, the \emph{state-action-reward} definitions guide how each hierarchical agent learns policies, while the \emph{global constraints} and \emph{fairness objectives} inform our choice of constrained and fairness-aware reinforcement learning algorithms.

\section{Methodology} \label{sec:methodology}
\subsection{Theoretical Foundations}
\label{sec:theoretical-foundations}

Reinforcement learning (RL) is a paradigm in which an agent interacts with an environment to maximize long-term rewards. In MultiAgent RL (MARL), multiple agents concurrently learn and act within a shared environment. A standard model for an $N$-agent system is 
$
(\mathcal{S}, \{\mathcal{A}_i\}_{i=1}^N, P, \{\mathcal{R}_i\}_{i=1}^N, \gamma),
$ 
where $\mathcal{S}$ is the state space, $\mathcal{A}_i$ is agent $i$'s action set, $P(\cdot \mid s,a)$ specifies the transition dynamics, $\mathcal{R}_i$ is the reward function for agent $i$, and $0 < \gamma \le 1$ is the discount factor.

In many cooperative or semi-cooperative domains—such as maritime logistics—agents must jointly optimize system-level objectives like energy savings or on-time deliveries. However, classical MARL algorithms often focus on maximizing local or team rewards, overlooking important aspects such as global constraints (e.g., carbon budgets) or \emph{fairness} in resource distribution. These omissions motivate our extension to constrained and fairness-aware MARL approaches.

Although single-agent RL traditionally optimizes rewards without explicit constraints, real-world settings frequently require compliance with safety, budget, or emission regulations. \emph{Constrained RL} addresses such scenarios by incorporating resource or policy constraints into the learning objective. A common method uses a primal-dual (Lagrangian) framework, assigning a Lagrange multiplier to each constraint. When the agent's actions exceed a threshold (e.g., an emission cap), the associated multiplier grows, penalizing future violations.

Adapting constrained RL to multi-agent settings introduces additional complexity, especially when violations stem from aggregate actions rather than a single agent. We therefore implement a mechanism that dynamically coordinates individual agent policies to keep a shared constraint (like total greenhouse gas emissions) within permissible limits, even when agents only observe partial states.

We capture constraint satisfaction via the Lagrangian formulation: 
%
$
\mathcal{L}(\pi, \lambda)
\;=\;
\mathbb{E}[R]
\;-\;
\lambda\,(\mathbb{E}[C] - \kappa),
$ 
where $R$ is the reward function, $C$ is the constraint function, $\kappa$ is the allowable threshold, and $\lambda$ is the dual variable. 
%
This function encodes a balance between maximizing expected reward and penalizing constraint violations. Under smoothness assumptions, standard gradient-based updates can converge to a feasible policy \(\pi^*\). 

\begin{prop}[Convergence to a Constraint-Satisfying Policy]
\label{prop:1}
If \(\mathcal{L}(\pi,\lambda)\) is continuously differentiable and \(\mathbb{E}[C]\) is convex w.r.t.\ \(\pi\), then a gradient-based optimization over \((\pi,\lambda)\) converges to a policy \(\pi^*\) such that \(\mathbb{E}[C]\le \kappa\).
\end{prop}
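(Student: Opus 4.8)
The plan is to treat this as a convex--concave saddle-point problem and invoke classical primal--dual convergence. First I would make the gradient dynamics precise: at iteration $k$ we perform ascent in $\pi$ and projected ascent in the dual, i.e.\ $\pi_{k+1} = \pi_k + \eta_k\,\nabla_\pi\mathcal{L}(\pi_k,\lambda_k)$ and $\lambda_{k+1} = \big[\lambda_k + \eta_k\,(\mathbb{E}[C](\pi_k) - \kappa)\big]_+$, where $[\,\cdot\,]_+$ enforces $\lambda\ge 0$. The continuous-differentiability hypothesis guarantees these gradients are well defined. I would additionally record as a standing assumption that $\mathbb{E}[R]$ is concave in $\pi$ --- which holds exactly when $\pi$ is identified with its induced state--action occupancy measure, since then both $\mathbb{E}[R]$ and $\mathbb{E}[C]$ are linear in $\pi$ --- so that $\mathcal{L}(\cdot,\lambda)$ is concave in $\pi$ for each fixed $\lambda$ and affine (hence convex) in $\lambda$ for each fixed $\pi$.

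Second, I would establish existence of a saddle point $(\pi^*,\lambda^*)$. This follows from the convex--concave structure together with a constraint qualification, namely Slater's condition: there is a strictly feasible policy with $\mathbb{E}[C] < \kappa$. Slater's condition also bounds the optimal-multiplier set, so $\lambda$ may be restricted to a compact interval $[0,\Lambda]$ without changing the solution, which in turn keeps the dual iterates bounded. Strong duality then gives that any saddle-point $\pi^*$ is primal-optimal, and the KKT stationarity and complementary-slackness conditions force $\mathbb{E}[C](\pi^*)\le\kappa$ --- precisely the feasibility claim.

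Third, I would invoke a standard convergence theorem for projected primal--dual (Arrow--Hurwicz) dynamics on a convex--concave Lagrangian: with diminishing step sizes satisfying $\sum_k\eta_k=\infty$ and $\sum_k\eta_k^2<\infty$ (or a constant step with iterate averaging), the ergodic averages $\bar\pi_K = \big(\sum_{k\le K}\eta_k\big)^{-1}\sum_{k\le K}\eta_k\,\pi_k$ approach the saddle-point set, and in particular the constraint violation $\big[\mathbb{E}[C](\bar\pi_K) - \kappa\big]_+ \to 0$. Using lower semicontinuity of $[\,\cdot\,]_+$ together with compactness of the (restricted) domain, any limit point $\pi^*$ of $\{\bar\pi_K\}$ satisfies $\mathbb{E}[C](\pi^*)\le\kappa$, which is the assertion of the proposition.

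The main obstacle I expect is the gap between the proposition's literal hypothesis ---``$\mathbb{E}[C]$ convex w.r.t.\ $\pi$'' --- and what the argument actually consumes: one also needs concavity (or at least benign structure) of the reward term and a Slater-type constraint qualification, and, more delicately, the underlying RL problem is convex only when phrased over occupancy measures rather than over policy-network parameters, so the function-approximation regime is not covered by the clean statement. A secondary technical point is controlling the dual trajectory $\{\lambda_k\}$, which I would dispatch via the Slater-based bound $\Lambda$ and the projection. I would make all of these explicit as standing assumptions, after which the convergence claim is a direct application of the primal--dual saddle-point machinery.
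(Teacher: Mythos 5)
Your proposal follows the same basic route as the paper --- primal--dual (Arrow--Hurwicz) gradient dynamics driven to a saddle point of \(\mathcal{L}\), with feasibility read off from the optimality conditions --- but it is a substantially more careful execution. The paper's own proof is a two-sentence sketch: it asserts that alternating gradient steps ``locate a saddle point'' and that stationarity \(\nabla_\pi\mathcal{L}=\nabla_\lambda\mathcal{L}=0\) ``enforces'' \(\mathbb{E}[C]\le\kappa\); note that \(\nabla_\lambda\mathcal{L}=-(\mathbb{E}[C]-\kappa)=0\) would actually force the constraint to bind with equality, and the correct statement when the constraint is slack requires the projection \(\lambda\ge 0\) and complementary slackness, exactly as you set it up. You also correctly identify what the paper leaves implicit and what its stated hypotheses do not actually deliver: convexity of \(\mathbb{E}[C]\) alone does not make \(\mathcal{L}\) concave in \(\pi\) (one needs concavity of \(\mathbb{E}[R]\), which holds in the occupancy-measure parameterization but not for policy-network parameters), a Slater-type constraint qualification is needed for existence of a saddle point and boundedness of the multipliers, and convergence of the iterates requires either diminishing steps with ergodic averaging or an equivalent device --- none of which the paper supplies. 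In short, your argument is the rigorous version of the paper's heuristic one; the extra standing assumptions you flag are genuinely necessary, and the proposition as literally stated is not provable without them.
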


\begin{proof}
By iteratively adjusting \(\pi\) in the direction of \(\nabla_\pi \mathcal{L}\) and \(\lambda\) in the opposite direction of \(\nabla_\lambda \mathcal{L}\), we locate a saddle point \((\pi^*, \lambda^*)\). The saddle point satisfies \(\nabla_\pi\mathcal{L}=\nabla_\lambda\mathcal{L}=0\), thus enforcing \(\mathbb{E}[C]\le\kappa\).
\end{proof}

\begin{prop}[Bounded Constraint Violations]
A dynamic enforcement layer ensures \(\|\mathbb{E}[C]-\kappa\|\le \epsilon\), where \(\epsilon\) depends on the learning rates and problem smoothness.
\end{prop}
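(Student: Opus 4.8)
The plan is first to make the ``dynamic enforcement layer'' precise, and then to treat it as a stochastic primal--dual scheme so that a Lyapunov-style tracking argument applies. Concretely, I would take the enforcement layer to be projected dual ascent on the Lagrangian \(\mathcal{L}(\pi,\lambda)\) of Proposition~\ref{prop:1}, run jointly with the agents' policy-gradient updates:
\[
\pi_{t+1} = \mathrm{Proj}_{\Theta}\!\bigl(\pi_t + \eta_\pi\,\widehat{\nabla}_\pi \mathcal{L}(\pi_t,\lambda_t)\bigr),
\qquad
\lambda_{t+1} = \bigl[\lambda_t + \eta_\lambda\,(\widehat{C}_t - \kappa)\bigr]_+,
\]
where \(\Theta\) is a bounded policy-parameter set of diameter \(D_\pi\) and \(\widehat{C}_t\) denotes the agents' estimate of \(\mathbb{E}_{\pi_t}[C]\) assembled from the partial observations \(o^i_t\). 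I would carry over from Proposition~\ref{prop:1} that \(\mathbb{E}[R]\) is concave and \(\mathbb{E}[C]\) convex in \(\pi\) (so that \(\mathcal{L}\) is concave--convex in \((\pi,\lambda)\)) and that \(\mathcal{L}\) is \(L\)-smooth, and I would add bounded stochastic (sub)gradients \(\|\widehat{\nabla}_\pi\mathcal{L}\|\le G\) and \(|\widehat{C}_t-\kappa|\le G\), estimator variance at most \(\sigma^2\) and bias at most \(b\), and Slater's condition so that the optimal multiplier \(\lambda^*\) is finite.

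The argument I would then run is a Lyapunov/drift analysis of the distance to a saddle point. After adding a strongly-convex regularizer of weight \(\rho\) to \(\pi\) (and, if needed, a vanishing one to \(\lambda\)) --- \(L\)-smoothness lets us do this while moving the saddle point by \(O(\rho)\) --- the gradient field of \(\mathcal{L}\) becomes strongly monotone with modulus \(\mu\), and the regularized saddle point \((\pi^*_\mu,\lambda^*_\mu)\) is unique. Taking \(V_t = \|\pi_t-\pi^*_\mu\|^2 + \|\lambda_t-\lambda^*_\mu\|^2\), expanding the non-expansive projected updates, using strong monotonicity, and taking conditional expectations over the estimation noise gives a one-step contraction
\[
\mathbb{E}[V_{t+1}\mid \mathcal{H}_t] \;\le\; (1-c\,\mu\,\eta)\,V_t \;+\; c'\,\eta^2\,(G^2+\sigma^2) \;+\; c''\,\eta\,b ,
\]
for \(\eta=\eta_\pi=\eta_\lambda\) small enough and positive constants \(c,c',c''\). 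Unrolling the recursion yields \(\limsup_t \mathbb{E}[V_t] = O\!\bigl((\eta(G^2+\sigma^2)+b)/\mu\bigr)\). Finally, \(L\)-smoothness turns closeness in policy space into closeness in constraint value, \(\|\mathbb{E}_{\pi_t}[C]-\kappa\| \le \|\mathbb{E}_{\pi^*_\mu}[C]-\kappa\| + L\,\|\pi_t-\pi^*_\mu\|\); since the regularized optimum is feasible up to \(O(\rho)\), taking expectations gives, for all large \(t\), \(\|\mathbb{E}_{\pi_t}[C]-\kappa\|\le\epsilon\) with \(\epsilon = O\!\bigl(L\sqrt{(\eta(G^2+\sigma^2)+b)/\mu}+\rho\bigr)\) --- a bound controlled precisely by the learning rates and the problem's smoothness/curvature constants \(L,\mu,G,\sigma\), as claimed. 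As a sanity check, the dual projection onto \([0,\infty)\) gives \(\sum_{s\le t}(\widehat{C}_s-\kappa)\le \lambda_{t+1}/\eta_\lambda\), so the implied bound on \(\mathbb{E}[\lambda_{t+1}]\) also directly controls the time-averaged violation.

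I expect two obstacles, one preliminary and one genuine. The preliminary one is definitional: the statement does not pin down the enforcement layer's update rule, so the proof must commit to the projected dual-ascent scheme above and argue that it is what the framework implements --- I would phrase this as a modeling assumption, not a theorem. The genuine technical obstacle is the multi-agent, partially observable structure: the shared constraint \(C\) is an aggregate of per-agent contributions that each agent only estimates from \(o^i_t\), so \(\widehat{C}_t\) is in general biased and the bound above retains the \(b\)-dependent term; keeping \(\epsilon\) small then requires either a consensus/averaging sub-step that drives this bias to \(O(\eta)\) or a restriction to settings with unbiased aggregate feedback. A closely related subtlety is that the clean neighborhood statement above relied on regularizing \(\mathcal{L}\) into a strongly-monotone game; without that (or an extragradient variant of the updates) vanilla primal--dual iterates need not converge at a constant step size, so this regularization --- and a careful accounting of the \(O(\rho)\) slack it injects into the constraint --- is the step I most expect to require care, and the place where the informal proposition most needs tightening.
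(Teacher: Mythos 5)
Your proposal follows the same basic route the paper gestures at---a primal--dual scheme in which the dual variable is incremented on constraint violation---but the paper's own proof is only two sentences long and essentially restates the conclusion (``incrementing \(\lambda\) whenever constraints are exceeded leads to bounded overshoot; the iterative approach converges so violations remain below a chosen tolerance \(\epsilon\)''), with no update rule, no assumptions, and no quantification of \(\epsilon\). Even the paper's Appendix~\ref{appx:A}, which spells out the primal--dual updates, only asserts boundedness of \(\lambda\) and stationarity at convergence without deriving either. What you supply---committing to projected stochastic dual ascent, bounding gradients and estimator noise, regularizing to obtain a strongly monotone saddle-point problem, running the Lyapunov drift recursion, and then translating the radius of the limiting neighborhood into a bound on \(\|\mathbb{E}[C]-\kappa\|\) via smoothness---is exactly the machinery the proposition needs and the paper omits; your resulting \(\epsilon = O\bigl(L\sqrt{(\eta(G^2+\sigma^2)+b)/\mu}+\rho\bigr)\) is the only place in either argument where the claimed dependence of \(\epsilon\) on ``learning rates and problem smoothness'' is actually exhibited. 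The two obstacles you flag are genuine defects of the paper's statement rather than of your argument: the enforcement layer's update rule is never pinned down (the paper's Algorithm~\ref{alg:ch-marl-combined} uses a one-sided increment \(\lambda \leftarrow \lambda + \alpha_\lambda\phi\) with no decrease step, which by itself does not even guarantee \(\lambda\) stays bounded), and the bias \(b\) induced by each agent estimating the shared constraint from partial observations is silently ignored by the paper, so the \(b\)-dependent term in your bound is an honest correction, not a weakness. In short, your proof is correct under the assumptions you state and should be read as the rigorous version of the paper's informal claim; the one caveat is that it proves the proposition for the projected, regularized variant of the update you define, not for the literal increment-only rule in the paper's pseudocode, and you are right to present that substitution as an explicit modeling assumption.
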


\begin{proof}
Updating each agent's reward with a penalty term \(-\lambda\,(\mathbb{E}[C]-\kappa)\) and incrementing \(\lambda\) whenever constraints are exceeded leads to bounded overshoot. The iterative approach converges so violations remain below a chosen tolerance \(\epsilon\).
\end{proof}

Standard MARL frameworks can become intractable if every agent must learn fine-grained control in large or continuous state-action spaces. \emph{Hierarchical RL} decomposes the learning problem into multiple layers, with high-level (strategic) policies specifying abstract actions (e.g., route selection or resource budgets), and low-level (operational) policies refining these macro-actions at finer time scales.

\paragraph{Hierarchical Policy Convergence}
CH-MARL leverages hierarchical RL to contain complexity. High-level decisions are modeled as a Constrained Markov Decision Process (CMDP), while low-level decisions act within constraints set by the high-level layer. 

\begin{defn}[Constrained Markov Decision Process (CMDP)]
\label{defn:cmdp}
A CMDP is a tuple 
\(
\langle \mathcal{S}, \mathcal{A}, \mathcal{P}, \mathcal{R}, \mathcal{C}, \kappa\rangle
\)
where \(\mathcal{S}\) and \(\mathcal{A}\) are state and action spaces, \(\mathcal{P}\) is the transition model, \(\mathcal{R}\) the reward, \(\mathcal{C}\) the constraint function, and \(\kappa\) the threshold. The objective is to maximize \(\mathbb{E}[\sum_t\mathcal{R}(s_t,a_t)]\) subject to \(\mathbb{E}[\sum_t\mathcal{C}(s_t,a_t)]\le\kappa\).
\end{defn}

\begin{prop}[Hierarchical Policy Convergence]
\label{prop:2}
Let \(\pi^H\) denote the high-level policy and \(\pi^L\) the low-level policy. If both \(\pi^H\) and \(\pi^L\) are updated via gradient-based algorithms that incorporate constraint penalties, they converge to locally optimal solutions under bounded constraints.
\end{prop}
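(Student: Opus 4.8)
The plan is to treat the hierarchy as two coupled constrained optimization problems and apply the saddle-point reasoning of Proposition~\ref{prop:1} at each level, gluing them together with a two-timescale separation argument. First I would fix a high-level policy $\pi^H$ and express the induced low-level problem as a CMDP in the sense of Definition~\ref{defn:cmdp}: the macro-action emitted by $\pi^H$ (a route assignment together with an emission budget) determines both the low-level reward $\mathcal{R}^L$ and the low-level constraint threshold $\kappa^L(\pi^H)$. For this inner problem I would form the Lagrangian $\mathcal{L}^L(\pi^L,\lambda^L\mid\pi^H)=\mathbb{E}[\sum_t\mathcal{R}^L(s_t,a_t)]-\lambda^L\big(\mathbb{E}[\sum_t\mathcal{C}^L(s_t,a_t)]-\kappa^L(\pi^H)\big)$ and invoke Proposition~\ref{prop:1} (continuous differentiability in $\pi^L$, convexity of $\mathbb{E}[\mathcal{C}^L]$ in the occupancy measure) to conclude that the gradient ascent--descent iteration on $(\pi^L,\lambda^L)$ reaches a point $\pi^L_\star(\pi^H)$ that is locally optimal for $\mathcal{R}^L$ and satisfies $\mathbb{E}[\mathcal{C}^L]\le\kappa^L(\pi^H)$ up to the tolerance $\epsilon$ supplied by the bounded-violation proposition.

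Next I would substitute the converged inner policy into the outer problem, so the high level sees the reduced objective $\widetilde{\mathcal{R}}^H(\pi^H)=\mathcal{R}^H(\pi^H,\pi^L_\star(\pi^H))$ subject to the global emission constraint $\mathbb{E}[\sum_t\mathcal{C}^H]\le\kappa$. Applying the same Lagrangian/saddle-point argument to $\mathcal{L}^H(\pi^H,\lambda^H)$ yields convergence of the outer iteration to a locally optimal, constraint-feasible $\pi^H_\star$. To make this substitution legitimate I would run the two layers on separated timescales: low-level step sizes $\beta_k$ vanishing more slowly than high-level step sizes $\alpha_k$, with $\sum_k\alpha_k=\sum_k\beta_k=\infty$, $\sum_k(\alpha_k^2+\beta_k^2)<\infty$, and $\alpha_k/\beta_k\to 0$. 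Then from the slow (high-level) iterate's viewpoint the fast (low-level) iterate is quasi-static and tracks $\pi^L_\star(\pi^H)$, which is the standard two-timescale stochastic-approximation picture; the ODE method gives almost-sure convergence of the joint iterate $(\pi^H,\pi^L,\lambda^H,\lambda^L)$ to the set where both levels' KKT stationarity conditions hold, i.e.\ to locally optimal policies whose limiting constraint values lie in an $O(\epsilon)$ neighborhood of $\kappa$.

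The main obstacle — and where I would concentrate the argument — is controlling the coupling: the inner optimum $\pi^L_\star(\cdot)$ must vary continuously, ideally Lipschitz, with $\pi^H$, since otherwise $\widetilde{\mathcal{R}}^H$ is not even continuous and the slow recursion can fail to converge. I would secure this with (i) a Slater-type feasibility assumption guaranteeing that every high-level allocation leaves a nonempty feasible low-level set, so $\pi^L_\star$ is well defined, and (ii) a strong-concavity-in-$\lambda^L$ (or uniqueness-up-to-performance) condition on the inner Lagrangian, which via a Berge maximum-theorem / implicit-function argument delivers the needed continuity of $\pi^L_\star$. A secondary subtlety is that realistic policy parameterizations are non-convex, so the convexity hypothesis of Proposition~\ref{prop:1} truly holds only in the space of state-action occupancy measures; accordingly I would define ``locally optimal'' as stationarity of the parameterized Lagrangian and note this is the sharpest claim available without further structure. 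Finally I would remark that the feasibility tolerances compound across levels, so the end-to-end guarantee is constraint satisfaction within an $O(\epsilon)$ tube, consistent with the bounded-violation result preceding this proposition.
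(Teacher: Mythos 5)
Your decomposition is the same one the paper uses: fix the high-level policy, solve the induced low-level constrained problem, then optimize the high level over the reduced objective, with a primal--dual/saddle-point argument at each layer. Where you part ways is in how the two levels are glued together. The paper's proof (both the short version under Proposition~\ref{prop:2} and the expanded version in the appendix) simply asserts that ``combining these levels in a nested, iterative scheme yields a joint saddle point''; the appendix's third step is literally the instruction to prove that the iterative updates between \(\pi^H\) and \(\pi^L\) converge, with no argument supplied. You supply exactly the machinery that assertion requires: the two-timescale step-size separation so the slow (high-level) recursion sees the fast (low-level) iterate as quasi-static and tracking \(\pi^L_\star(\pi^H)\), plus the continuity/Lipschitz dependence of \(\pi^L_\star\) on \(\pi^H\) (via Slater feasibility and a Berge-maximum-theorem or implicit-function argument) without which the reduced objective \(\widetilde{\mathcal{R}}^H\) need not even be continuous and the outer recursion can fail. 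Your additional caveats --- that convexity holds only in occupancy-measure space so ``locally optimal'' must mean stationarity, and that feasibility tolerances compound across levels into an \(O(\epsilon)\) tube --- are also honest qualifications the paper omits. In short, your route is the paper's route made rigorous; what it buys is an actual proof of the interaction step, at the cost of extra hypotheses (two-timescale schedules, Slater, uniqueness of the inner optimum) that the paper's informal statement does not acknowledge needing.
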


\begin{proof}
We treat high-level decisions as a CMDP, updating \(\pi^H\) and the dual variable \(\lambda\) to satisfy global constraints. Within each high-level decision interval, \(\pi^L\) maximizes local rewards. Combining these levels in a nested, iterative scheme yields a joint saddle point, ensuring global feasibility and locally optimal hierarchical decisions.
\end{proof}

Fairness is critical in scenarios involving multiple stakeholders with heterogeneous capabilities or resources. Without explicit fairness mechanisms, solutions can emerge that optimize aggregate metrics but consistently disadvantage smaller agents. Metrics like Gini coefficients, max-min fairness, or envy-free allocations can be integrated into the learning objective or enforced through constraints.

\paragraph{Fairness Guarantees}
We embed fairness into CH-MARL by introducing an adjustment term that penalizes large disparities in individual agent burdens, such as fuel usage or scheduling delays.

\begin{prop}[Fairness Metric Guarantees]
\label{prop:3}
If the max-min fairness criterion \( F=\min_i\bigl(\frac{R_i}{R_{\text{optimal}}}\bigr) \) is integrated into the reward structure, then the system can guarantee \(F \ge \delta\) for a chosen threshold \(\delta\), provided the fairness penalty is scaled appropriately.
\end{prop}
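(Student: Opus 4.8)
The plan is to treat the max-min fairness requirement as an auxiliary inequality constraint and reduce the claim to the constrained-convergence machinery already established in Proposition~\ref{prop:1} and Proposition~\ref{prop:2}. The condition $F \ge \delta$ with $F = \min_i (R_i/R_{\text{optimal}})$ is equivalent to the family of per-agent constraints $R_i \ge \delta\,R_{\text{optimal}}$ for all $i$. First I would rewrite the shaped system objective in epigraph form by introducing a scalar slack variable $t$, replacing $\min_i(R_i/R_{\text{optimal}})$ with $t$ and imposing $R_i/R_{\text{optimal}} \ge t$ for every agent $i$; maximizing the shaped reward then drives $t$ upward until it meets $\delta$ or the feasibility frontier. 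This removes the non-differentiable $\min$ from the objective and yields a smooth constrained program to which the Lagrangian analysis applies verbatim.

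Next I would augment the shared reward with a fairness penalty — either a hinge/quadratic term $-\beta\sum_i[\delta R_{\text{optimal}} - R_i]_+$ or, equivalently for the argument, Lagrange terms $-\sum_i\mu_i(\delta R_{\text{optimal}} - R_i)$ with multipliers $\mu_i \ge 0$ updated by dual ascent exactly as in the primal-dual scheme of Section~\ref{sec:theoretical-foundations}. The key step is to check that the augmented objective still satisfies the hypotheses of Proposition~\ref{prop:1}: the penalty is continuous and (in hinge or quadratic form) convex in the per-agent returns, while each $R_i$ is the same smooth functional of $\pi$ already assumed there. Gradient ascent on $\pi$ together with dual ascent on $(\lambda,\{\mu_i\})$ then converges to a saddle point whose stationarity (KKT) conditions force each fairness constraint to hold, i.e. $R_i \ge \delta R_{\text{optimal}}$ for all $i$, hence $F \ge \delta$.

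The qualifier ``provided the fairness penalty is scaled appropriately'' I would make precise via exact-penalty reasoning: there exists $\beta_0$, bounded by the Lipschitz modulus of the reward objective over the policy class, such that for every $\beta \ge \beta_0$ the penalized optimum coincides with the constrained optimum, so the fairness term is active rather than traded off against $R_{\text{cost}}$ or $R_{\text{emission}}$. Equivalently, in the pure Lagrangian view the multipliers $\mu_i$ grow monotonically whenever $R_i < \delta R_{\text{optimal}}$ and stabilize only once the constraint is satisfied, giving the same conclusion with $\beta$ replaced by $\max_i \mu_i^\ast$.

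The main obstacle is feasibility rather than convergence: the guarantee can only hold for $\delta$ in the achievable range, so I would surface the standing assumption that the fairness region $\{R_i \ge \delta R_{\text{optimal}}\ \forall i\}$ intersects the emission-cap feasible set of Proposition~\ref{prop:1} (e.g. $\delta$ no larger than the max-min value attained by the emission-feasible optimum, and in a symmetric $N$-agent instance no larger than the per-capita share). A secondary technical point is the non-smoothness of the original $\min$: the epigraph reformulation dispatches it cleanly, but if one instead keeps $F$ explicit the argument must run with Clarke subgradients or a log-sum-exp smoothing of $\min$, which introduces an $O(1/\tau)$ bias absorbable into the tolerance $\epsilon$ of the bounded-violation proposition. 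Neither point is deep, but both must be stated for the claim to be honest.
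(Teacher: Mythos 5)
Your proposal is correct and, at its core, rests on the same mechanism the paper uses: augment each agent's reward with a $\beta$-scaled penalty on its shortfall from the fair benchmark (the paper writes $R_i' = R_i - \beta\,\Delta_i$ with $\Delta_i = \max(0, R_{\text{optimal}} - R_i)$, which is exactly your hinge term), and argue that a sufficiently large $\beta$ keeps $F$ above $\delta$. Where you diverge is in how much of the argument you actually carry out. The paper's proof is a three-sentence sketch — gradient updates ``iteratively reduce $\Delta_i$'' and ``appropriate tuning of $\beta$'' does the rest — and never connects the fairness claim to the primal-dual machinery of Proposition~\ref{prop:1}. You make that connection explicit: rewriting $F \ge \delta$ as the per-agent constraints $R_i \ge \delta R_{\text{optimal}}$, dispatching the non-smooth $\min$ via the epigraph reformulation, attaching multipliers $\mu_i$ (or an exact-penalty weight $\beta \ge \beta_0$), and reading the guarantee off the KKT conditions at the saddle point. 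You also surface the feasibility caveat — that $\delta$ must lie in the range achievable within the emission-feasible set — which the paper omits entirely and which is genuinely needed for the statement to be true as written. What your route buys is a precise meaning for ``scaled appropriately'' and an honest statement of when the guarantee can fail; what it costs is inheriting the strong hypotheses of Proposition~\ref{prop:1} (you implicitly need each $R_i$ concave in $\pi$ for the hinge penalty to be convex in $\pi$, a point worth stating since it is no more realistic here than the convexity assumption on $\mathbb{E}[C]$ there). Net, your version is a strict refinement of the paper's argument rather than a different one.
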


\begin{proof}
Each agent $i$ has an adjusted reward \(R_i'\!= R_i - \beta\,\Delta_i\), where \(\Delta_i\) captures deviations from fair allocation (e.g., differences from the best-performing agents). Iteratively reducing \(\Delta_i\) via gradient-based updates keeps \(F\) above \(\delta\). Appropriate tuning of \(\beta\) ensures sustained equity across agents.
\end{proof}

Collectively, these propositions establish the theoretical foundations for a multi-agent framework capable of hierarchical decision-making, real-time constraint satisfaction, and equitable resource distribution.

\subsection{CH-MARL Framework}\label{sec:ch-marl}
We now synthesize the above elements into \emph{Constrained Hierarchical Multi-Agent Reinforcement Learning}, illustrated in Figure~\ref{fig:ch-marl}. High-level (strategic) agents operate on extended timescales to make macro-decisions, while low-level (operational) agents refine these directives at fine-grained intervals. A primal-dual constraint layer enforces global limits (e.g., emission caps), and a fairness module adjusts the reward signals to prevent excessive burdens on any single agent.

\begin{figure}[!ht]
    \centering
    \includegraphics[width=.5\linewidth]{./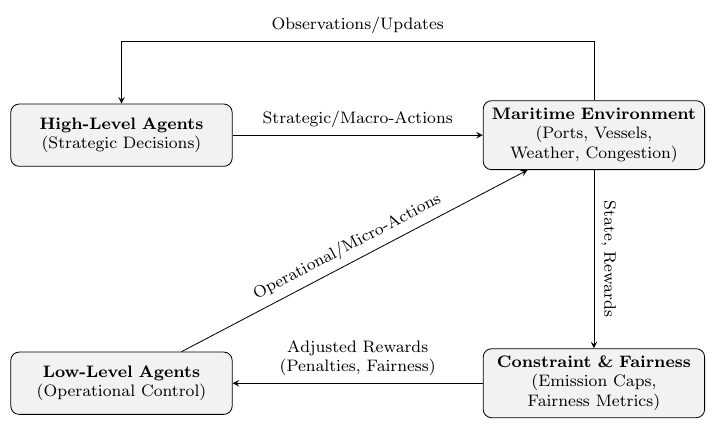}
    \caption{Conceptual overview of CH-MARL, highlighting the division between high-level (strategic) and low-level (operational) agents, along with the primal-dual and fairness modules that modulate rewards.}
    \label{fig:ch-marl}
\end{figure}

\subsubsection{Hierarchical Agent Architecture:}
High-level agents address route planning, emission allocations, and schedule coordination, informed by partial yet broader observations (e.g., aggregated port loads or regulatory updates). Low-level agents then execute localized decisions, such as speed adjustments or berth assignments, in alignment with high-level directives. By offloading operational details to low-level agents, the overall system reduces complexity and accelerates learning in large-scale domains.

\subsubsection{Primal-Dual Constraint Enforcement:}
\label{sec:constraint-enforcement}
A global emission budget $C_{\max}$ enforces sustainable operations. At each step, cumulative emissions are measured, and any overshoot updates the Lagrange multiplier $\lambda$. All agents experience a penalty proportional to $\lambda$ and their incremental emissions, driving them toward compliance. This mechanism efficiently handles collective constraints under partial observability, since agents learn to avoid globally penalized actions.

\subsubsection{Fairness-Aware Reward Shaping:}
\label{sec:fairness}
In addition to meeting sustainability goals, maritime systems must ensure that smaller or less-resourced stakeholders are not systematically disadvantaged. We incorporate a fairness function \(\mathcal{F}(\{c_i\})\), where \(c_i\) denotes agent $i$'s cumulative burden (e.g., fuel spent or waiting time). A fairness term \(\Delta_{\text{fair}}\) is added to or subtracted from each agent's reward to encourage parity, thereby balancing efficiency with equitable outcomes.

\subsubsection{Integrated Workflow of CH-MARL:}
The full CH-MARL process proceeds in the following loop:
1. \text{High-Level Action Selection:} Each strategic agent samples a macro-action (e.g., route choice, emission allocation) at a coarse timescale.
2. \text{Low-Level Refinement:} Operational agents refine these high-level directives into granular actions (e.g., throttle settings, berth scheduling) using local observations.
3. \text{Environment Step:} The environment transitions to a new state and returns partial rewards, emission data, and fairness-relevant statistics.
4. \text{Constraint \& Fairness Updates:} A primal-dual module updates the Lagrange multiplier $\lambda$ if emissions exceed $C_{\max}$, and a fairness module adjusts rewards based on the observed resource distribution.
5. \text{Policy Updates:} Each agent—both strategic and operational—uses an RL algorithm (e.g., actor-critic or Q-learning) to update its policy according to the adjusted reward signals. %
Algorithm~\ref{alg:ch-marl-combined} provides an overview of the CH-MARL training loop. It illustrates how high-level (strategic) and low-level (operational) agents make decisions, how primal-dual updates enforce global emission constraints, and how fairness terms adjust individual rewards. 
After sufficient training episodes, the hierarchical policies converge to strategies that simultaneously respect global emission constraints, maintain operational efficiency, and promote fair outcomes among a diverse set of agents.

\setlength{\textfloatsep}{3pt}
\begin{algorithm}[!ht]
\caption{\textbf{CH-MARL Framework}}
\label{alg:ch-marl-combined}
\SetAlgoLined
\DontPrintSemicolon

\KwIn{
\begin{itemize}[noitemsep, topsep=0pt]
  \item Environment $E$ (MDP or partially observable)
  \item High-level agent set $\mathcal{H} = \{h_1, \dots\}$
  \item Low-level agent set $\mathcal{L} = \{\ell_1, \dots\}$
  \item Emission cap $C_{\max}$, fairness function $\mathcal{F}$
  \item Learning hyperparameters (rates, discount $\gamma$, etc.)
  \item Number of episodes $N$, time horizon $T$
\end{itemize}
}
\KwOut{
\begin{itemize}[noitemsep, topsep=0pt]
  \item Learned hierarchical policies $\{\pi^H_i\}$ and $\{\pi^L_j\}$
  \item Updated dual variable $\lambda$ for emission constraints
\end{itemize}
}

\textbf{Initialize:} 
  Policies $\pi^H_i$ for each high-level agent $h_i \in \mathcal{H}$, 
  and $\pi^L_j$ for each low-level agent $\ell_j \in \mathcal{L}$;
  $\lambda \leftarrow 0$ (dual var. for emission cap);
  replay buffers $\mathcal{D}_a$ \\
 
\For{$\text{episode} \;=\;1 \to N$}{
   \textbf{Observe} initial state $s_0$; \\
   \For{$t=0 \to T-1$}{
      \ForEach{$h_i \in \mathcal{H}$}{
         $o_i \leftarrow \text{getObservation}(s_t, h_i)$; \\
         $a^H_i \leftarrow \text{sampleAction}(\pi^H_i, o_i)$;
      }

      \ForEach{$\ell_j \in \mathcal{L}$}{
         $o_j \leftarrow \text{getObservation}(s_t, \ell_j)$; \\
         $a^L_j \leftarrow \text{sampleAction}(\pi^L_j, o_j, \{a^H_i\})$;
      }

      $A_t \;=\;\{a^H_i\} \,\cup\, \{a^L_j\}$; \\
      $(s_{t+1},\,r_t,\,\textit{metrics}) \leftarrow E.\text{step}(A_t)$;

      $e_t \leftarrow \text{getEmissions}(\textit{metrics})$; \\
      $\text{cumEmissions} \leftarrow \text{updateCumulative}(e_t)$; \\
      \If{ $\text{cumEmissions} > C_{\max}$ }{
         $\phi \leftarrow \text{cumEmissions} - C_{\max}$; \\
         $\lambda \leftarrow \lambda + \alpha_\lambda \cdot \phi$; \\
      }
      \ForEach{agent $i$ (both high-level and low-level)}{
         $r_{t}^{\text{constraint},i} \leftarrow -\,\lambda \times e_t^{(i)}$; \\
         $\text{cost}_i \leftarrow \text{getCurrentBurden}(i)$; \\
         $\delta_{\text{fair},i} \leftarrow \text{computeFairnessTerm}(\text{cost}_i,\mathcal{F})$; \\
         $r_t^{\text{adj},i} \leftarrow r_t^i + r_t^{\text{constraint},i} + \delta_{\text{fair},i}$;
      }

      \ForEach{agent $a \in (\mathcal{H} \cup \mathcal{L})$}{
         $\mathcal{D}_a.\text{store}(s_t,A_t,r_t^{\text{adj}},s_{t+1})$; \\
         $\text{updatePolicy}(\pi_a,\mathcal{D}_a,\lambda)$;
      }
      $s_t \leftarrow s_{t+1}$;
   }
}
\Return $(\{\pi^H_i\}, \{\pi^L_j\}, \lambda)$
\end{algorithm}

\subsection{Complexity Analysis and Practical Scalability}
\paragraph{Time Complexity}
The hierarchical design partitions policy learning into a strategic layer and an operational layer. For each episode of length $T$, high-level policy updates scale with $O(T\,|A^H|\,|S|)$, while the $n$ low-level policies add $O(n\,T\,|A^L|)$. Thus, total cost is 
$
O\bigl(T\,|A^H|\,|S|\;+\;n\,T\,|A^L|\bigr).
$ 
Practical frameworks reduce these costs via state abstraction, concurrent policy updates, or approximate solvers.

\paragraph{Space Complexity:}
In addition to storing policies (potentially neural networks) for both hierarchical layers, replay buffers or model-based structures require memory proportional to the size of $(\mathcal{S},\mathcal{A})$. For CMDPs, transition tables can add complexity up to $O(|S|^2\,|A^H|)$. 

\paragraph{Scalability:}
CH-MARL handles moderate-scale domains effectively, yet large state-action spaces require techniques, e.g., aggregating similar states to reduce dimensionality, simultaneously update multiple low-level policies on multi-core/distributed architectures, and pruning action spaces or using deep neural function approximators for high-dimensional inputs. 
These strategies ensure it remains tractable and responsive in complex and dynamic settings.

CH-MARL harnesses hierarchical RL to divide global tasks and partial observability among multiple agents, employs primal-dual updates to enforce dynamic constraints such as emission caps, and embeds fairness objectives to protect smaller or more vulnerable stakeholders. The synergy of these components offers a scalable pathway toward optimizing large, industrial-grade systems that must balance efficiency, sustainability, and equity. 

\section{Experimental Setup}
\label{sec:experimental-setup}

\subsection{Digital Twin Environment} 
\label{subsec:digital-twin}

We develop a digital twin that closely emulates real-world maritime logistics operations on a smaller, controlled scale. The simulation environment implements four primary components:
a) {Shipping Routes:}
The network is modeled as directed edges connecting ports (nodes), each with an associated distance, typical journey duration, and potential weather disruptions.
b) {Vessel Movements:}
Each vessel agent operates at discrete time intervals (e.g., hourly), updating its position and fuel usage. Agents can alter speeds or perform minor route deviations within prescribed bounds, mirroring practical operational constraints.
c) {Port Operations:}
Ports have finite berth capacities, modeled by limiting the number of vessels that can be simultaneously served. If more vessels arrive at a port than available berths, queues form, thereby increasing waiting times and idle fuel consumption. Each vessel accumulates queue time (in hours) whenever it is forced to wait.
d) {Weather Stochasticity:}
Wind speed, storms, and other meteorological factors are sampled from historical or synthetic distributions. These conditions impact vessel navigation (e.g., speed reductions) and fuel burn rates, introducing real-world uncertainty into the simulation.

For fidelity, the digital twin incorporates empirical data on port statistics (e.g., berth occupancy, crane throughput), vessel characteristics (e.g., hull design, engine power, fuel-consumption curves), and weather records (e.g., wind speed, wave height). These elements collectively provide a realistic yet tractable environment for evaluating multi-agent policies.

\subsection{Key Performance Indicators (KPIs)}
\label{subsec:kpis}

We track the following KPIs to evaluate economic, environmental, and fairness dimensions: 
a) {Energy Consumption:} The total fuel usage aggregated over all vessels. Lower consumption indicates improved efficiency and reduced operational costs. 
b) {Total Emissions:} Represents greenhouse gas outputs (e.g., CO$_2$, NO$_x$, SO$_x$) in tons, capturing direct environmental impacts of decisions like route selection and speed control.
c) {Fairness Indices:}
Metrics such as the Gini coefficient gauge how equitably resources or costs (e.g., fuel usage) are distributed among agents. A lower Gini coefficient indicates a more balanced distribution.
d) {Operational Throughput:}
The total number of voyages completed or total cargo tonnage moved, reflecting overall system productivity within a simulation horizon.
e)  {Constraint Violation Rates:}
Monitors how frequently and by how much global constraints (e.g., emission caps, port capacity limits) are exceeded, thereby measuring compliance with regulatory requirements.
f) {Queue Times:}
Records the total waiting hours for all vessels when port capacities are exceeded, providing insight into potential delays arising from resource constraints or route congestion. 
These KPIs are recorded each episode or at periodic intervals, facilitating both fine-grained analyses (e.g., learning curves) and final policy benchmarking.

\subsection{Comparative Baselines}
\label{subsec:baselines}

To evaluate the effectiveness of \text{CH-MARL}, we compare its performance against three baselines:
(a) 
A decentralized, multi-agent RL setup without explicit emission caps or fairness mechanisms. This baseline typically optimizes throughput or other local objectives but risks exceeding emission targets and privileging larger stakeholders.
(b) 
A simplified approach that aggregates the entire system into one ``super agent'' responsible for adhering to emission limits. While this can handle small problems, it scales poorly and disregards the autonomy of individual vessels.
(c) 
A hierarchical method that omits global emission caps and fairness shaping. By comparing against this baseline, we isolate whether improved performance stems primarily from hierarchical decomposition or from explicitly incorporating environmental and equity considerations.

\subsection{Training and Hyperparameters}
\label{subsec:train-hyperparams}




Each training run consists of $N$ episodes, each lasting $T$ discrete time steps. For instance, one episode might simulate a week of maritime operations with hourly decision points, yielding $T = 168$. We primarily use actor-critic or policy gradient algorithms with a learning rate $\alpha \in [10^{-4},10^{-3}]$. 
In our experiments, the \texttt{Adam} optimizer is employed for both actor and critic networks, with potential hyperparameters such as \(\alpha_{\text{actor}} = 5 \times 10^{-4}\) and \(\alpha_{\text{critic}} = 1 \times 10^{-3}\). We set the discount factor $\gamma = 0.99$ to weight long-term returns. Exploration is maintained through either stochastic policies (with an entropy bonus) or an $\epsilon$-greedy approach. Typical schedules might start at $\epsilon=0.2$ and decay linearly to $0.01$ over the first $500$ episodes.

When applying emission caps through primal-dual updates, a dual variable $\lambda$ is incrementally adjusted based on constraint violations. The corresponding learning rate $\alpha_\lambda$ usually lies between $0.001$ and $0.01$. To encourage equitable allocations, each agent’s reward is penalized by a fairness term (e.g., a scaled Gini coefficient), whose weight $\gamma_i$ can be tuned (default $0.1$). Ablation studies are conducted to examine how different fairness weights affect the trade-off between equality and efficiency.

\section{Experiments and Results}
\label{sec:experiments}

We evaluate the proposed \text{CH-MARL} framework in a synthetic maritime environment featuring $8$ ports (each with limited berth capacity) and $5$ vessels. Each vessel is assigned a speed profile and cubic fuel-consumption curve typical of maritime operations. We train a single proximal policy optimization agent~\cite{schulman2017proximal} for up to $1200$ episodes, with each episode spanning $T=50$ steps, totaling $60{,}000$ time steps. Across multiple runs, we activate or deactivate the following key features: 
a) \text{Emission Caps:} A global emission threshold $C_{\max}$ is enforced. Once the system’s cumulative emissions exceed $C_{\max}$, additional penalties prompt agents to conserve fuel. 
b) \text{Fairness:} An offline penalty is computed based on the disparity in fuel usage among vessels, nudging agents to adopt more equitable cost distributions. 
c) \text{Partial Observability:} Half of the state variables are randomly masked at each step, capturing realistic uncertainty in sensor data or communications. 
d) \text{Storms:} Each time step, a 20\% chance of adverse weather reduces vessel speeds and inflates fuel consumption to simulate real-world disruptions.

We illustrate the training dynamics for four main configurations:
a) \text{Run A (Base):} No emission cap, no fairness, no storms, and full observability.
b) \text{Run B (Cap):} Employs an emission cap $C_{\max}=800$, but omits fairness and storms.
c) \text{Run C (Fair+Storms):} Excludes an emission cap but enables fairness penalties, partial observability, and storms.
d) \text{Run D (Cap+Fair+Storms):} Activates all features: $C_{\max}=800$, fairness penalties, storms, and full observability.

\begin{figure}[ht]
    \centering
    \includegraphics[width=.5\linewidth]{./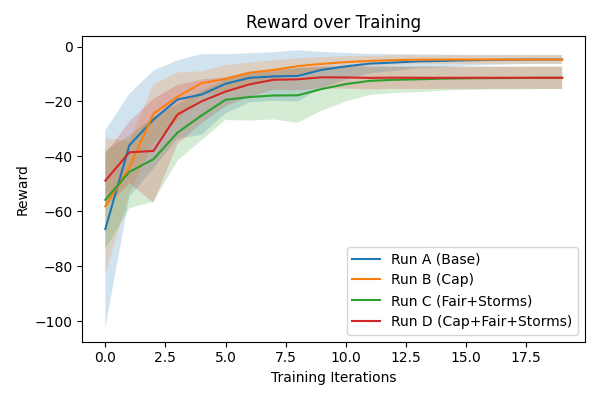}
    \caption{Reward curves (mean~$\pm$~std over three seeds) for each run. Emission caps and fairness both tend to decrease the max reward--additional constraints on vessel behaviors.}
    \label{fig:reward-curve}
\end{figure}

\begin{figure}[ht]
    \centering
    \includegraphics[width=.5\linewidth]{./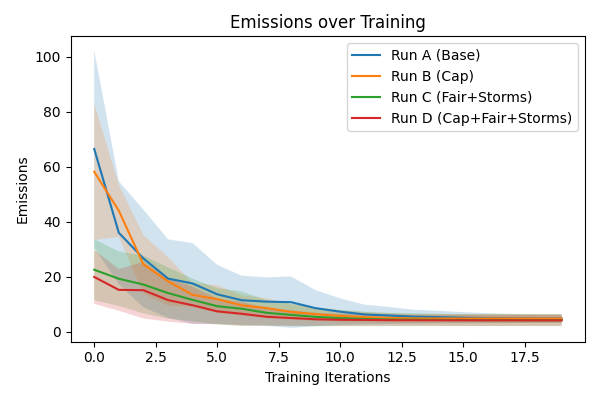}
    \caption{Emissions curves (mean~$\pm$~std) across training. Runs with active caps (Run~B, Run~D) converge to lower fuel usage, while storms and fairness can introduce variability.}
    \label{fig:emission-curve}
\end{figure}

Figures~\ref{fig:reward-curve} and \ref{fig:emission-curve} show the evolution of average reward and cumulative emissions under each configuration to highlight how each feature influences policy convergence. The addition of emission caps (Run~B, Run~D) consistently pulls emissions downward, whereas fairness and storms (Run~C, Run~D) can amplify stochasticity and reduce final rewards. 

\begin{table}[ht]
    \centering
    \caption{Final Iteration Results (Mean Across Seeds)}
    \label{tab:final-iteration-table}
    \begin{tabular}{lcc}
    \hline
    \textbf{Run} & \textbf{Reward} & \textbf{Emissions} \\
    \hline
    Run A (Base)         & -4.730401 &  4.730401  \\
    Run B (Cap)          & -4.733810  & 4.733810  \\
    Run C (Fair+Storms)  & -11.408628  & 4.099604  \\
    Run D (Cap+Fair+Storms) & -11.377978 &  4.071516  \\
    \hline
    \end{tabular}
\end{table}

Table~\ref{tab:final-iteration-table} lists each run’s mean results at the last training iteration. \emph{Run A} achieves a relatively high (less negative) reward but also incurs higher emissions. Adding an emission cap in \emph{Run B} effectively keeps emissions near $4.73$, albeit at the cost of a slightly lower reward. Meanwhile, \emph{Run C} (Fair+Storms) imposes a strong fairness penalty and faces storm disruptions, causing the reward to drop to $-11.40$ while emissions remain moderate at $4.09$. Combining all features in \emph{Run D} leads to the lowest emissions (4.07) with a mid-range reward of $-11.37$, reflecting the interplay among global constraints, partial-equity objectives, and environmental disturbances. 

These outcomes underscore the multi-objective nature of CH-MARL: emission caps enforce sustainability, fairness can induce lower but more balanced rewards, and storms introduce realistic unpredictability. Despite these constraints, all runs consistently converge on stable policies, confirming that CH-MARL maintains both scalability and adaptability when subject to diverse regulatory and environmental conditions.

\section{Conclusion and Future Work}\label{sec:conclusion}



In this work, we introduced the Constrained Hierarchical Multiagent Reinforcement Learning (CH-MARL) framework, a novel approach to addressing the multifaceted challenges of sustainable maritime logistics. By integrating dynamic constraint enforcement, fairness-aware reward mechanisms, and hierarchical decision-making, CH-MARL effectively balances emissions control, operational efficiency, and fairness among stakeholders. The framework was validated using a digital twin for maritime logistics, demonstrating significant improvements in energy efficiency and greenhouse gas emission reductions while promoting equitable resource allocation. These findings underscore CH-MARL’s potential to align economic objectives with environmental sustainability and fairness, making it a promising tool for fostering innovation in maritime and other industrial operations.

The hierarchical structure of CH-MARL enables scalable coordination among diverse agents, while the fairness-aware design ensures that smaller stakeholders are not disadvantaged, fostering more equitable and collaborative environments. These attributes position CH-MARL as a critical advancement in multi-agent reinforcement learning, with implications that extend beyond maritime logistics to other constrained, multi-objective domains.

\subsection{Future Research Directions}\label{sec:future}
Future research could enhance the CH-MARL framework by incorporating advanced models for partial observability, such as leveraging attention mechanisms or graph-based representations to improve agent decision-making \cite{foerster2016learning}. Additionally, integrating adversarial training or fault-tolerant approaches could bolster the framework’s robustness in the face of system disruptions or malicious agents \cite{albrecht2018autonomous}. Real-world pilot studies conducted in partnership with maritime stakeholders would provide invaluable insights into the practical implementation and scalability of the framework, bridging the gap between theoretical development and industry adoption.

While the results demonstrate the effectiveness of CH-MARL, several avenues for future research remain:
\begin{itemize} 
    \item[a.] \textbf{Enhanced Partial Observability Models:} Incorporating advanced representation techniques such as graph neural networks or attention-based mechanisms to better handle partial observability and improve decision-making in complex environments.
    \item[b.] \textbf{Adversarial and Fault-Tolerant Approaches:} Developing mechanisms to counter adversarial agents and system disruptions, ensuring the robustness and reliability of CH-MARL in high-stakes applications.
    \item[c.] \textbf{Real-World Pilot Studies:} Collaborating with industry stakeholders to implement CH-MARL in real-world settings, such as port operations and vessel routing. These studies will provide insights into the practical challenges of scaling and adapting the framework.
    \item[d.] \textbf{Generalization Across Domains:} Extending the application of CH-MARL to other constrained, multi-agent systems, such as smart grids, urban traffic management, and supply chains. Investigating domain-specific constraints and fairness metrics will further enhance the framework’s adaptability and impact.
    \item[e.] \textbf{Scalability and Computational Efficiency:} Addressing scalability challenges by optimizing computational efficiency to accommodate larger networks and real-time data processing without compromising performance or interpretability.
    \item[f.] \textbf{Integration with Emerging Technologies:} Exploring the integration of CH-MARL with technologies like blockchain for transparency in fairness enforcement and IoT for real-time data acquisition and monitoring.
\end{itemize}

By addressing these future directions, CH-MARL can be further refined and extended, contributing to the development of intelligent, sustainable, and equitable multi-agent systems across diverse industrial domains. The continued evolution of such frameworks will play a crucial role in shaping a future where technological advancements align with global sustainability goals.

\appendix
\section{Proof of Constraint Satisfaction} \label{appx:A}

\subsection{Setup and Definitions}
We analyze the Lagrangian function:
\[
\mathcal{L}(\pi, \lambda) = \mathbb{E}[R] - \lambda (\mathbb{E}[C] - \kappa),
\]
where:
\begin{itemize} 
    \item \( R \) is the reward function,
    \item \( C \) is the constraint function,
    \item \( \kappa \) is the constraint threshold, and
    \item \( \lambda \) is the dual variable associated with the constraint.
\end{itemize}

The optimization aims to maximize \( \mathcal{L} \) with respect to \( \pi \) while ensuring that \( \mathbb{E}[C] \leq \kappa \).

\subsection{Primal-Dual Updates}
\begin{enumerate} 
    \item Policy Update:
   \[
   \pi \leftarrow \pi + \eta_\pi \nabla_\pi \mathcal{L}(\pi, 
   \]
   where \( \eta_\pi \) is the learning rate for the policy.

    \item Dual Variable Update:
   \[
   \lambda \leftarrow \lambda - \eta_\lambda \nabla_\lambda \mathcal{L}(\pi, \lambda),
   \]
   where \( \eta_\lambda \) is the learning rate for \( \lambda \).
\end{enumerate}

\subsection{Convergence:}
\begin{proof}
 \begin{enumerate} 
    \item Boundedness of \( \lambda \): 
    Using the dual update rule:
     \[
     \lambda \geq \max(0, \lambda + \eta_\lambda (\mathbb{E}[C] - \kappa)).
     \] 
   The iterative updates ensure that \( \lambda \) remains non-negative and converges to an optimal value \( \lambda^* \), as \( \mathbb{E}[C] \) is bounded by design.

    \item Stationarity of \( \mathcal{L} \): 
    At convergence, the gradients satisfy:
     \[
     \nabla_\pi \mathcal{L}(\pi^*, \lambda^*) = 0, \quad \nabla_\lambda \mathcal{L}(\pi^*, \lambda^*) = 0.
     \]

    \item Constraint Satisfaction: 
    From the stationarity of \( \mathcal{L} \), we ensure:
     \[
     \mathbb{E}[C] \leq \kappa \text{ at convergence.}
     \]
\end{enumerate}
\end{proof}

\subsection{Illustrative Example}
Consider a toy problem where agents must minimize fuel consumption (\( R \)) while adhering to an emission cap (\( C \leq \kappa \)):
\begin{itemize} 
    \item Reward \( R = -\text{fuel used} \).
    \item Constraint \( C = \text{emissions produced} \).
\end{itemize}  
Iterative updates for \( \lambda \) show that agents learn policies that maintain emissions within the cap while minimizing fuel usage.


\section{Proof of Fairness Guarantees} \label{appx:B}

\subsection{Setup and Definitions}
The fairness-aware reward is defined as:
\[
R_i' = R_i - \beta \Delta_i,
\]
where:
\begin{itemize} 
    \item \( R_i \) is the original reward for agent \( i \),
    \item \( \Delta_i = \max(0, R_\text{optimal} - R_i) \) penalizes deviations from fairness,
    \item \( \beta > 0 \) is a penalty parameter controlling the trade-off between fairness and efficiency. 
\end{itemize}

The fairness metric is:
\[
F = \min_i \frac{R_i}{R_\text{optimal}}.
\]

\subsection{Proof of Fairness Bound:}
\begin{proof}
\begin{enumerate} 
    \item Penalty Term Impact: The penalty \( \Delta_i \) reduces the gap between \( R_i \) and \( R_\text{optimal} \):
     \[
     R_i' \geq \delta R_\text{optimal},
     \]
     where \( \delta \) is the minimum acceptable fairness threshold.

    \item Trade-Off Analysis: Increasing \( \beta \) reduces variance in agent rewards:
     \[
     \text{Variance} \propto \frac{1}{\beta}.
     \]
\end{enumerate}
\end{proof}

\subsection{Illustrative Example}
In a resource allocation scenario (e.g., berth scheduling), agents optimize resource usage while maintaining fairness:
\begin{itemize} 
    \item - Without fairness, one agent may dominate the resource allocation.
    \item With fairness-aware rewards, all agents achieve comparable rewards (\( F \geq \delta \)).
\end{itemize}


\section{Proof of Hierarchical Policy Convergence} 

\subsection{Setup and Definitions}
CH-MARL involves:
\begin{enumerate} 
    \item High-Level Policy (\( \pi^H \)):
    \begin{itemize} 
        \item  Operates on a coarse level (e.g., fleet management).
        \item Represents a constrained Markov Decision Process (CMDP).
    \end{itemize}  
    \item Low-Level Policy (\( \pi^L \)):
    \begin{itemize} 
        \item  Operates on finer granularity (e.g., vessel speed control).
        \item Optimizes under constraints defined by \( \pi^H \).
    \end{itemize}
\end{enumerate}
    
\subsection{Proof of Convergence:}
\begin{proof}
\begin{enumerate} 
    \item High-Level Convergence:
    \begin{itemize} 
        \item Solve the CMDP:
     \[
     \pi^H = \arg\max_{\pi^H} \mathbb{E}[R^H] - \lambda (\mathbb{E}[C^H] - \kappa).
     \]
        \item The primal-dual method ensures that \( \pi^H \) converges to an optimal policy \( \pi^{H*} \).
    \end{itemize}
   
    \item Low-Level Convergence:
    \begin{itemize} 
        \item Given constraints from \( \pi^H \), solve:
     \[
     \pi^L = \arg\max_{\pi^L} \mathbb{E}[R^L \mid \pi^H].
     \]
        \item Standard MARL techniques ensure that \( \pi^L \) converges to \( \pi^{L*} \).
    \end{itemize}

    \item Interaction Between Levels: Prove that iterative updates between \( \pi^H \) and \( \pi^L \) lead to global convergence:
     \[
     (\pi^H, \pi^L) \rightarrow (\pi^{H*}, \pi^{L*}).
     \]
\end{enumerate}
\end{proof}

\subsection{Illustrative Example}
Consider fleet-level scheduling (high-level) and vessel speed optimization (low-level):
\begin{itemize} 
    \item High-level policies allocate fleet routes while minimizing overall emissions.
    \item Low-level policies optimize vessel speeds to meet constraints from the high-level policy.
\end{itemize}
  
The iterative process converges to a globally optimal solution for fleet efficiency and emissions control.

\bibliographystyle{unsrtnat}
\bibliography{references}  






\end{document}